\documentclass[runningheads]{llncs}
\usepackage[T1]{fontenc}
\usepackage{graphicx}
\usepackage{siunitx}
\usepackage{amsmath}
\usepackage{amssymb}
\usepackage{dyatex}
\usepackage{ntdmath}
\usepackage{sequat}
\usepackage{petrinet}
\usepackage[
pdfusetitle,
colorlinks,
bookmarksopen,
bookmarksnumbered,
citecolor=gray,
urlcolor=gray,
linkcolor=gray]{hyperref}
\usepackage{tcolorbox}
\usepackage{enumitem}
\usepackage{cite}
\tcbset{colback=gray!5!white,colframe=gray!75!black,fonttitle=\bfseries}

\usepackage{subcaption}
\usepackage{tabularx}
\usepackage{booktabs}
\usepackage{multirow}
\usepackage{multicol}
\usepackage{wrapfig}

\usepackage{graphicx}
\usepackage{caption}

\newcommand{\varTsei}{\gamma} 
\newcommand{\varLiteral}{\ell}
\newcommand{\varExp}{\phi}

\newcommand{\varPnTrans}{\tau}
\newcommand{\varPnPlace}{p}
\newcommand{\marking}{\VEC{p}}
\newcommand{\transitions}{\boldsymbol{\tau}}
\newcommand{\weightasym}{C}
\newcommand{\weighta}{\MAT{\weightasym}}
\newcommand{\pval}[0]{\DataSty{partial-eval}}

\newcommand{\psat}[0]{\DataSty{partial-sat}}

\newcommand{\arcweight}{\emph{w}}

\newcommand{\infexplanation}{\ensuremath{\xi}}
\newcommand{\infexplanationspace}{\ensuremath{\Xi}}

\newcommand{\relaxedtransitions}{\tilde{\boldsymbol{\tau}}}

\newcommand{\slackp}{\emph{s}^+}
\newcommand{\slackn}{\emph{s}^-}
\newcommand{\slackpv}{\bold{s}^+}
\newcommand{\slacknv}{\bold{s}^-}

\newcommand{\updates}{\mathcal{U}}

\renewcommand{\actspace}{\mathcal{A}}

\newcommand{\planprob}[0]{\ensuremath{\mathcal{P}}}
\newcommand{\planupdate}[0]{\ensuremath{\mathcal{U}}}

\newcommand{\planstart}[0]{\ensuremath{I}}
\newcommand{\plangoal}[0]{\ensuremath{G}}
\newcommand{\plangoalclause}[0]{\ensuremath{g}}
\newcommand{\planround}[2]{{#1}^{#2}}
\newcommand{\planroundend}[0]{n}
\newcommand{\probindex}[0]{i}
\newcommand{\stepvar}[0]{k}
\newcommand{\stephorizon}[0]{h}

\newcommand{\pbindingsstep}[1]{\discstep{\mathcal{B}}{#1}}

\newcommand{\pddlActions}{\ensuremath{A}}
\newcommand{\pddlaction}{\ensuremath{a}}
\newcommand{\pddlVars}{\ensuremath{V}}
\newcommand{\pddlvariable}{\ensuremath{v}}

\begin{document}
\title{Petri Net Relaxation for Infeasibility Explanation and Sequential Task Planning}
%
%
\author{Nhat Le\inst{1}\orcidID{0009-0007-1013-3711} %
\and John G. Rogers\inst{2}\orcidID{0000-0002-6074-0823} %
\and Claire N. Bonial\inst{2}\orcidID{ 0000-0002-3154-2852} %
\and Neil T. Dantam\inst{1}\orcidID{0000-0002-0907-2241}}
\authorrunning{Le, Rogers, Bonial, and Dantam}
%
\institute{Colorado School of Mines, Golden CO 80401, USA\\
  \email{\{nguyencongnhat\_le,ndantam\}@mines.edu}
  \and DEVCOM Army Research Laboratory, Adelphi, MD, USA\\
  \email{\{john.g.rogers59.civ,claire.n.bonial.civ\}@army.mil}
}
\maketitle              
\begin{abstract}
  Plans often change due to changes in the situation or our
  understanding of the situation. Sometimes, a feasible plan may not
  even exist, and identifying such infeasibilities is useful to
  determine when requirements need adjustment.
  Common planning approaches focus on efficient one-shot planning in
  feasible cases rather than updating domains or detecting
  infeasibility.
  We propose a Petri net reachability relaxation to enable robust
  invariant synthesis, efficient goal-unreachability detection, and
  helpful infeasibility explanations.
  We further leverage incremental constraint solvers to support goal
  and constraint updates. Empirically, compared to baselines, our
  system produces a comparable number of invariants, detects up to
  $2\times$ more infeasibilities, performs competitively in one-shot
  planning, and outperforms in sequential plan updates in the tested
  domains.

\keywords{Task Planning \and Algorithmic Completeness and Complexity}
\end{abstract}

\section{Introduction}

The classic line ``Kein Operationsplan reicht mit einiger Sicherheit
\"{u}ber das erste Zusammentreffen mit der feindlichen Hauptmacht
hinaus,'' roughly translates to ``no plan survives contact with the
enemy''~\cite{moltke1900military}.
Across applications, plans must often change, because the situation
has changed, or because a user did not initially understand or correctly
specify the situation.
Further, it is useful to identify cases when no plan exists, and why.
Planning research has largely considered a given specification or
model, where our only want is to find a plan, ideally
fast~\cite{vallati20152014,cenamor2019insights,ipc2023}.
Dynamic navigation~\cite{koenig2002d, stentz1995focussed,
  ziebart2008fast} and motion~\cite{mohanan2018survey,
  chiang2019safety} planning are well-studied, yet adapting to
analogous changes in task planning is less
explored~\cite{soutchanski2025planning}.
We address this challenge of sequentially planning or explaining
infeasibility and updating task planning problems.

\dyathesis{We relax Petri net reachability to identify invariants and
  explain infeasibilities in combinatorial and numeric task planning,
  coupled with incremental constraint solving for sequential planning
  to show improved running times, improved infeasibility detection,
  and new infeasibility explanation}.
Transforming planning problems to Petri nets (PNs) reveals useful
problem structure.
Relaxing PN reachability to a linear program (LP) effectively
identifies invariants and infeasible problems, and conflicting LP
contraints offer useful explanations for planning infeasibilities.
Incremental constraint solving efficiently incorporates certain forms
of problem updates.
Compared to evaluated baseline planners, we show competitive running
times for some single-shot numeric domains, improved running times for
sequential planning on tested domains, improved infeasibility
detection on tested domains, and new capabilities to explain
infeasibility.

\section{Related Work}

Task planning is well-established, largely evolving from the
pioneering work on STRIPS \cite{fikes1972strips}.  Efficient task
planning approaches include heuristic search
\cite{helmert2006fast,hoffmann2001ff,scala2016interval} and
constraint-based methods
\cite{kautz1999unifying,rintanen2012engineering,rintanen2014madagascar,bofill2016rantanplan,hsu2007constraint}.
Planning domains are often specified using the Planning Domain
Definition Language (PDDL)
\cite{edelkamp2004pddl2,mcdermott1998pddl,haslum2019introduction}.

Prior approaches have also posed planning problems as integer programs
(IPs)~\cite{van2005reviving,spevak2022ivenness}.
\cite{vossen2000applying,vossen1999use} propose an IP formulation for
with an effective LP relaxation for plan-length estimation.
We develop a different LP formulation for invariant and infeasibility
detection.

Identifying infeasible planning problems has received less attention
than planning for feasible cases.
Some approaches focus on human-understandable
explanations~\cite{sreedharan2019can, valmeekam2022radar}.
\cite{eriksson2017unsolvability} produces unsolvability certificates,
focusing on model verification.
We develop a polynomial-time infeasibility test that is empirically
robust and further identifies conflicting constraints.

Prior work has addressed planning in dynamic and incomplete spaces.
Building on heuristic search~\cite{hart1968formal}, incremental
approaches~\cite{stentz1994d, koenig2002d, likhachev2008anytime} find
optimal solutions in dynamic graphs under a fixed goal
assumption, with extensions for changing
targets~\cite{koenig2007speeding,sun2010moving}.
We take a constraint-based approach to leverage efficient solution
techniques to dynamically change goals and
constraints~\cite{barbosa2022cvc5,de2008z3, gurobi}.

\section{Problem Definition} \label{sec:def}

We address \emph{sequential task planning} which involves finding
sequences of actions from a start to a goal, or determining no such
sequence of actions exists, for a sequence of related planning
problems.
Often, planning is viewed as a one-shot process, yet we may need to
iteratively revise planning problems---e.g., changing goals or
constraints---due to changes in the environment or to effectively
capture user intent.
We define one-shot and then sequential planning.

A planning problem
$\mathcal{P} =
\lmtuple{\mathcal{X}}{\actspace}{f}{\mathcal{I}}{\mathcal{G}}
{\mathcal{O}}$ consists of state space $\mathcal{X}$, actions
$\actspace$, transition function
$f: \mathcal{X\times A \rightarrow X}$, initial condition
$\mathcal{I} \in \mathcal{X}$, goals
$\mathcal{G} \subset \mathcal{X}$, and an objective
$\mathcal{O}$~\cite{lavalle2006planning}.
A feasible plan is a sequence of actions from the initial state
$\mathcal{I}$ to a goal state in $\mathcal{G}$ following applications of
transition function $f$, and an optimal plan is a feasible plan that
also maximizes (minimizes) objective $\mathcal{O}$.

We consider sequential planning as a process of solving a sequence of
related planning problems.  We begin with a given initial planning
problem $\planround{\planprob}{0}$.  Then, we are given update
sequence
$\planround{\planupdate}{1},\ldots,\planround{\planupdate}{\planroundend}$.
The planning problem at the $\probindex^{\text{th}}$ round is the
recursive composition of the $\probindex^{\text{th}}$ update
$\planround{\planupdate}{\probindex}$ and the previous problem
$\planround{\planprob}{\probindex-1}$,
\begin{equation} \label{eq:problem-update}
  \planround{\planprob}{\probindex}
  =
  \begin{cases}
    \planround{\planprob}{0},
    & \mathbf{if}\ \probindex=0 \\
    \FuncSty{compose}
    \lmtuple{\planround{\planprob}{\probindex-1}}{\planround{\planupdate}{\probindex}},
    & \mathbf{if}\ \probindex \geq 1
  \end{cases}
    \;.
\end{equation}
See \autoref{sec:incremental} for the specific updates and
compositions in our approach.

At each round $\probindex$, we must solve
$\planround{\planprob}{\probindex}$ either by (1) finding a feasible
(optimal) plan or (2) determining that no plan exists and
explaining,
$\planround{\infexplanation}{\probindex} \in \infexplanationspace$,
the infeasibility,
\begin{equation}\label{eq:plan-output}
  \FuncSty{plan}\left(\planround{\planprob}{\probindex}\right)
  \in
    \actspace^*
  \cup
  \left(
  \lmset{\DataSty{UNSAT}}
  \times \infexplanationspace
  \right)
  \; .
\end{equation}
See \autoref{ssec:infeasibility-expl} for the infeasibility
explanations we produce.

Sequential planning terminates when
$\FuncSty{plan}\left(\planround{\planprob}{\planroundend}\right)$
produces a desired result.

\section{Background}
\paragraph{Planning Domain Definition Language}
\label{sec:pddl}
The Planning Domain Definition Language (PDDL) is a de-facto standard
notion for describing planning problems~\cite{mcdermott1998pddl}.
PDDL descriptions use first-order logic to specify the planning
problem, though we consider the grounded or propositionalized form
$\lmtuple{\pddlVars}{I}{A}{G}$, where $\pddlVars$ is a set of state
variables that may be Boolean- or real-valued, $I$ is an initial
state, $\pddlActions$ is a set of actions, and $G$ is a set of goal
states. State $s$ is a binding to each state variable.  Actions
$\pddlaction \in \pddlActions$ include preconditions \emph{pre} and
effects \emph{eff}.

\paragraph{Constraint Solvers}

We consider two types of constraint formulations and solvers:
Satisfiability Modulo Theories (SMT) and Mixed Integer Programming
(MIP).
SMT extends Boolean satisfiability with rules (theories) for domains
such as enumerated types and linear
arithmetic~\cite{BarFT-RR-25,de2011satisfiability}. Compared to
traditional SAT solvers, SMT solvers provide a higher level interface
with useful features for expressing constraints
\cite{wang2016task,cashmore2016compilation,dantam2018tmp}.
MIP extends numerical programming (constrained optimization) to
include not only real-valued but also integer or Boolean decision
variables.
Though seeming to come from opposite directions, SMT and MIP offer
some convergent capabilities; however, algorithmic differences in
solvers sometimes yield substantially different performance.
SMT solvers~\cite{de2008z3,barbosa2022cvc5} are often based on a
backtracking search core~\cite{marques1999grasp,moskewicz2001chaff},
while MIP solvers~\cite{huangfu2018parallelizing,gurobi} are often
organized around a branch-and-cut framework~\cite{padberg1991branch}.
Performance differences suggest advantages to supporting multiple
solver types~\cite{rungta2022billion,combrink2025comparative}.

A useful feature of many SMT solvers is \emph{incremental} constraint
solving, enabling certain additions and deletions of constraints at
run-time to produce alternative solutions.
There are two mechanisms for incremental solving: an assertion
stack~\cite{ganzinger2004dpll} and per-check
assumptions~\cite{een2003extensible}.
An incremental SMT solver maintains constraints using a stack of
scopes, where each scope is a container for a set of constraints.  New
constraints are added to the scope on top of the stack, and solvers
offer an interface to push and pop scopes (removing constraints in any
popped scope).
Per-check assumptions are constraints that hold only for a specific
satisfiability check.
Solvers may retain or discard learned lemmas (e.g., additional
conflict clauses) differently after a check with assumptions or
popping a scope, resulting in performance differences between these
two mechanisms.

\paragraph{Constraint-based Planning}
\label{sec:bg:cstrplan}

Constraint-based planners encode the planning domain as a logical or
numeric formula and use a constraint solver, often a Boolean
satisfiability solver, to find a satisfying assignment corresponding
to a plan~\cite{lavalle2006planning,kautz1992planning}. Commonly, the
decision variables represent the state and actions for a fixed number
of steps $\stephorizon$. The formula asserts that the start state
holds at step $0$, goal condition holds at step $\stephorizon$, and
valid transitions occur between each step $\stepvar$ and $\stepvar+1$.
Planners increase step count $\stephorizon$ (creating additional
decision variables and formula clauses) until finding a satisfying
assignment encoding a plan.

A crucial part of efficient constraint-based planners
(e.g.,~\cite{kautz1999unifying,rintanen2012engineering}) is
identifying \emph{invariants}~\cite{alcazar2015reminder}, which are
facts or constraints that must hold for all reachable states at one or
more steps.  One common invariant form is \emph{mutual exclusion}
(mutex)---two Boolean state variables that cannot simultaneously be
true.
Invariants narrow the search space by eliminating decision variables
or enabling solvers to more rapidly identify bindings, resulting in
empirical runtime improvement~\cite{gerevini1998inferring}.

\paragraph{Petri Nets}

Petri nets (PNs) are a representation for transition
systems that offer useful features for
concurrency and shared resources~\cite{murata1989petri}.
A PN is a weighted, directed bipartite graph comprised of
\emph{places} and \emph{transitions}. Classically, a PN place contains
a non-negative integer-valued number of \emph{tokens}, though we
generalize to Boolean, integer, and real places.
A transition may \emph{fire} to alter the token count of adjacent
places by edge weights. A transition may only fire if the
\emph{precondition places} have enough tokens, known as the transition
cost. A \emph{marking} is the number of tokens in each place, i.e., a
PN state.
Finally, we also considers a set of goal markings for PNs, defined by
a range of tokens on each place.

PNs offer a close relationship to task planning domains beyond typical
PN applications to concurrent systems~\cite{hickmott2007planning} and
in particular
capture the implicit concurrency that prior work has found to be
critical for effective
performance~\cite{kautz1999unifying,rintanen2012engineering}. Further,
the PNs inform the linear relaxation of \autoref{ssec:relaxed}.

\section{Method} \label{sec:method}
\subsection{Petri Nets for Planning Domains} \label{ssec:pddl-pn}

\newcommand{\pddlpnalign}{\node[] at (0em,-1.75em) {$\ $};}

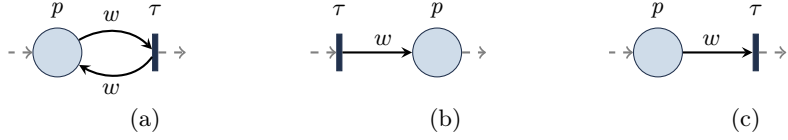
\begin{figure}
  \centering
  \begin{subfigure}[b]{.3\textwidth}
    \begin{tikzpicture}[ ]
      \node[coordinate] (box1:s) {};
      \node[pn:place,left of=box1:s] (box1 a) {};
      \node[pn:translr] at ($(box1 a)+(4em,0)$) (box1 t) {};
      \draw[pn:arc] (box1 a) edge[out=30,in=135] (box1 t);
      \draw[pn:arc] (box1 t) edge[out=-125,in=-30] (box1 a);
      \draw[pn:arcabs,<-] (box1 a) -- ++(-2em,0);
      \draw[pn:arcabs,] (box1 t) -- ++(1.25em,0);

      \node(p) at ($(box1 a)+(0,1.75em)$) {\emph{p}};
      \node(t) at ($(box1 t)+(0,1.75em)$) {\emph{$\tau$}};
      \node(w1) at ($(box1 t)+(-1.80em,1.5em)$) {\emph{w}};
      \node(w2) at ($(box1 t)+(-1.80em,-1.5em)$) {$w$};

      \pddlpnalign{}
    \end{tikzpicture}
    \vspace{-5pt}
    \caption{}
    \label{fig:pn:preeff:pre}
  \end{subfigure}
  \hfil
  \begin{subfigure}[b]{.3\textwidth}
    \begin{tikzpicture}[ ]
      \node[coordinate] (box3:s) {};
      \node[pn:translr,left of=box3:s] (box3 t) {};
      \node[pn:place] at ($(box3 t)+ (4em,0)$) (box3 a) {};
      \draw[pn:arc] (box3 t) -- (box3 a);
      \draw[pn:arcabs,<-] (box3 t) -- ++(-1.25em,0);
      \draw[pn:arcabs,] (box3 a) -- ++(2em,0);
      \node(p) at ($(box3 a)+(0,1.75em)$) {\emph{p}};
      \node(t) at ($(box3 t)+(0,1.75em)$) {\emph{$\tau$}};
      \node(w1) at ($(box3 t)+(1.80em,0.5em)$) {$w$};

      \pddlpnalign{}
    \end{tikzpicture}
    \vspace{-5pt}
    \caption{}
    \label{fig:pn:preeff:eff}
  \end{subfigure}
  \hfil
  \begin{subfigure}[b]{.3\textwidth}
    \begin{tikzpicture}[ ]
      \node[coordinate] (box2:s) {};
      \node[pn:place,left of=box2:s] (box2 a) {};
      \node[pn:translr] at ($(box2 a)+(4em,0)$) (box2 t) {};
      \draw[pn:arc] (box2 a) -- (box2 t);
      \draw[pn:arcabs,<-] (box2 a) -- ++(-2em,0);
      \draw[pn:arcabs,] (box2 t) -- ++(1.25em,0);
      \node(p) at ($(box2 a)+(0,1.75em)$) {\emph{p}};
      \node(t) at ($(box2 t)+(0,1.75em)$) {\emph{$\tau$}};
      \node(w1) at ($(box2 t)+(-1.80em,0.5em)$) {$w$};

      \pddlpnalign{}
    \end{tikzpicture}
    \vspace{-5pt}
    \caption{}
    \label{fig:pn:preeff:preeff}
  \end{subfigure}
  \vspace{-5pt}
  \caption{Petri net structure for PDDL preconditions and effects.
  (a) variable in precondition (not in effect).  (b)
  variable in effect (not in precondition).  (c) variable in
  precondition and effect.}
  \label{fig:pn:proposition}
\vspace{-10pt}
\end{figure}

We construct a PN from a grounded (propositionalized) PDDL domain (see
\autoref{sec:pddl}).  PN places correspond to PDDL state variables
$\pddlVars$, PN transitions correspond to PDDL actions $\pddlActions$,
and arcs encode preconditions and effects.

Actions involving Boolean state variables present three cases (see
\autoref{fig:pn:proposition}).
When the variable occurs only in the precondition, we create incoming
and outgoing arcs weighted $1$ for positive or $-1$ negative
preconditions (see \autoref{fig:pn:preeff:pre}).
When the variable occurs only in the effect, we create an arc from the
transition to the place weighted $1$ for positive or $-1$ negative
effects (see \autoref{fig:pn:preeff:eff}).
When the variable occurs in both the precondition and effect, we
create an arc from the place to the transition weighted $1$
to flip the variable from true to false or $-1$ to flip the variable
from false to true (see \autoref{fig:pn:preeff:preeff}).

Actions involving numeric state variables require similar arc
construction and further inference of bounds (box constraints) that
are implicitly specified in PDDL.
When the numeric variable $\pddlvariable{}$ appears in the effect with
a constant increase or decrease, we create an arc from the transition
to the place weighted by the amount of increase or decrease
(\autoref{fig:pn:preeff:eff}).
Next, we infer the bounds.
Numeric variables that never appear in decrease (increase) effects
have a lower bound (upper bound) equal to their starting value.
When every action $i$ that decreases $\pddlvariable{}$ by $x_i$ has
precondition $v \geq y_i$, we infer a lower bound of $\pddlvariable{}$
as the minimum of $y_i-x_i$.
When every action $i$ that increases $\pddlvariable{}$ by $x_i$ has
precondition $v \leq y_i$, we infer an upper bound of
$\pddlvariable{}$ as the maximum of $y_i+x_i$.

Initial state $\planstart{}$ corresponds to the PN's initial marking,
where token counts at places equal the values of corresponding state
variables in \planstart{}. Goal \plangoal{} corresponds to a goal
marking. A plan is a sequence of transitions that propagates the PN
from the initial marking to a goal marking.

This transformation defines the PN structure used throughout the paper
and underlies empirical results in \autoref{sec:result}.

\subsection{Petri Net Constraints} \label{ssec:pn-constr}

Next, we construct symbolic constraints for the PN dynamics.
While these constraints are similar to existing constraint-based
planning methods (see \autoref{sec:bg:cstrplan}), the PN formulation
supports the relaxation in \autoref{ssec:relaxed}, offering further
information about reachability and invariants.
Following typical constraint-based methods, the decision variables
represent place token counts (planning state variables) and
transitions that fire (planning actions) over steps 0 to
$\stephorizon$, and the constraint formula encodes the PN's transition
function along with start and goal conditions.

Each numeric or integer place yields a linear constraint for
token flow,
\begin{equation}
  \discstep{\varPnPlace}{\stepvar+1} = \discstep{\varPnPlace}{\stepvar} +
  \sum_{i \in \text{in}} \arcweight_i\discstep{\varPnTrans_i}{\stepvar}
  - \sum_{j \in \text{out}} \arcweight_j\discstep{\varPnTrans_j}{\stepvar}
  \; ,
  \label{eq:pn:numflow}
\end{equation}
\noindent   where
$\discstep{\varPnPlace}{\stepvar},\discstep{\varPnPlace}{\stepvar+1}$ are the token
counts of the place at steps $\stepvar,\stepvar+1$, $\varPnTrans_i$
are the incoming transitions with arc weights $\arcweight_i$, and
$\varPnTrans_j$
are the outgoing transitions with arc weights $\arcweight_j$.
Combining the constraints \eqref{eq:pn:numflow} for each place
produces a system of linear equations called the \emph{marking
  equation}~\cite{murata1989petri},
\begin{equation} \label{eq:pn:dynamics}
  \discstep{\marking}{\stepvar+1} =
  \discstep{\marking}{\stepvar} + \weighta\discstep{\transitions}{\stepvar}
  \; ,
\end{equation}
\noindent where
$\discstep{\marking}{\stepvar}, \discstep{\marking}{\stepvar+1}$ are
vectors of token counts (markings) at steps $\stepvar,\stepvar+1$,
$\discstep{\transitions}{\stepvar}$ is a vector indicating
transitions that fire, and $\weighta$ is the \emph{incidence matrix}
in which each entry $\weightasym_{ij}$ is the change in token count at
place $\varPnPlace_i$ when transition $\varPnTrans_j$ fires.

Constraints for Boolean places take a different form because we
consider as valid effects rebinding a true Boolean place to true, or
false to false.  For arc weights of 1, constraints for a Boolean place
take the following form,
\begin{equation}
  \begin{split}
  \forall i,\quad\;
  \discstep{\varPnTrans_{\mathrm{in},i}}{k} & \implies \discstep{\varPnPlace}{k+1} \\
  \forall j,\ \
  \discstep{\varPnTrans_{\mathrm{out},j}}{k} & \implies \discstep{\varPnPlace}{k} \land
                                        \lnot\discstep{\varPnPlace}{k+1} \\
  \forall \varLiteral,
  \discstep{\varPnTrans_{\mathrm{inout},\ell}}{k} & \implies \discstep{\varPnPlace}{k} \land
                                             \discstep{\varPnPlace}{k+1}
   \; ,
   \end{split}
   \label{eq:pn:boolflow}
\end{equation}
\noindent where
$\varPnTrans_{\mathrm{in}},\varPnTrans_{\mathrm{out}},\varPnTrans_{\mathrm{inout}}$
are the adjacent transitions with incoming, outgoing, and both
incoming and outgoing arcs to $p$.
Negative arc weight reverse the polarities of $\varPnPlace$ in the
implication's conclusion.

We exclude conflicting transitions from simultaneously
firing.  Conflicts may occur if one transition modifies a place in a
way that could negate a precondition of another transition.  This
mutex constraint is,
\begin{equation}
  \varPnTrans_{\mathrm{group},1}+\ldots+\varPnTrans_{\mathrm{group},{m}}
  \leq 1
  \; .
  \label{eq:mutex}
\end{equation}
We note that \eqref{eq:mutex} is different from typical mutex
constraints of pairwise disjunctions
$\lnot \varPnTrans_i \lor \lnot \varPnTrans_j $, e.g., used by
\cite{rintanen2014madagascar,kautz1998blackbox}. Pairwise disjunctions
produce quadratically-sized constraints, whereas the summation
\eqref{eq:mutex} is linear in the number of variables.  While classic
SAT solvers often required the quadratically-sized disjunctions, some
modern SMT solvers support pseudo-Boolean constraints~\cite{de2008z3}
to directly, compactly, and efficiently represent \eqref{eq:mutex}.

We also add bounds (box constraints) when known for integer and real
places,
\begin{equation}
    \min(p) \leq p \leq \max(p)
    \; .
\end{equation}

Finally, we assert the start state at step 0 and the goal at final
step $\stephorizon$.

The constraints described in this section are adequate to correctly
specify the transitions of a PN.  However, the performance of
current solvers~\cite{de2008z3,gurobi} on these constraints scales poorly and is not
practical beyond small problems.  We address this performance concern
and further offer new capabilities for reachability identification
through the linear relaxation in the next section.

\subsection{Relaxed Petri Net Reachability}%
\label{ssec:relaxed}

We describe a linear relaxation of PN reachability that offers
an effective analysis tool for planning problems.
PN reachability is well-studied and computationally
hard~\cite{blondin2020abcs,Cabasino2013}---an unsurprising observation
at this point given the reduction of task planning (also
computationally hard~\cite{bylander1994computational}) to PN
reachability.
We extend linear relaxations of PN reachability to address Boolean
state variables and show the effective identification of planning
invariants and infeasibilities.

The idea of the relaxation is to sum flows over all steps, considering
only initial marking, final marking, and how many times each
transition fired.  Intuitively, this relaxation captures a
conservation requirement: tokens are not created \emph{ex nihilo}, but
rather moved between places as transitions fire.  The relaxation does
not, however, fully capture order requirements and thus may
optimistically (and incorrectly) determine a certain state is
reachable when sums of flows could indicate so, but when no valid
ordering of the indicated transitions exists, i.e., the relaxed
solution requires some places to be negative or out-of-bounds at
intermediate steps.  We now derive the relaxed constraint equations.

Consider the marking equation \eqref{eq:pn:dynamics} applied from
steps 0 to $\stephorizon$,
\begin{equation}
  \discstep{\marking}{\stephorizon} = \discstep{\marking}{0} +
  \weighta\discstep{\transitions}{0}
  + \ldots + \weighta\discstep{\transitions}{\stephorizon-1}
  =
   \discstep{\marking}{0} + \weighta\sum_{\stepvar = 0}^{\stephorizon - 1}\discstep{\transitions}{\stepvar}
\end{equation}
A change of variables
$\relaxedtransitions = \sum_{\stepvar = 0}^{\stephorizon -
  1}\discstep{\transitions}{\stepvar}$ yields the system of equations,
\begin{equation}\label{eq:pn:relaxed:dynamics:numeric}
  \discstep{\marking}{\stephorizon} = \discstep{\marking}{0} + \weighta\relaxedtransitions
  \; .
\end{equation}

Our interpretation of Boolean places requires an additional
consideration. Under \eqref{eq:pn:boolflow}, transitions may fire that
bind a Boolean place to true (false), even if that place is already
true (false). If we would accumulate those firings as in equation
\eqref{eq:pn:relaxed:dynamics:numeric} and some other transition uses
that place in a negative (positive) precondition, then the relaxation
could miss a reachable state.
Effectively, rebindings turn \eqref{eq:pn:relaxed:dynamics:numeric}
into an inequality, which we address with non-negative slack
variables.
For each place $\varPnPlace_i$, we create a non-negative slack
variable $\slackp_i$ that increments the relaxed count and a
non-negative slack variable $\slackn_i$ that decrements the relaxed
count.
If $\varPnPlace_i$ is Boolean and no transition can rebind
$\varPnPlace_i$ to false, $\slackp_i =  0$.
If $\varPnPlace_i$ is Boolean and no transition can rebind
$\varPnPlace_i$ to true, $\slackn_i =  0$.
If $\varPnPlace_i$ is real or integer, $\slackp_i = \slackn_i = 0$.
Together, the slack variables form vectors
$\slackpv \in \mathbb{R}_+^m$ and $\slacknv \in \mathbb{R}_+^m$,
producing, 
\begin{equation} \label{eq:pn:relaxed:dynamics}
    \discstep{\marking}{h} = \discstep{\marking}{0} +
    \weighta\relaxedtransitions + \slackpv - \slacknv
    \; .
\end{equation}
Relaxing integers and Booleans to reals in
\eqref{eq:pn:relaxed:dynamics} produces a linear system.

\begin{proposition} \label{prop:inf} A marking
  $\discstep{\marking}{\stephorizon}$ resulting in infeasible
  constraints for the relaxed system dynamics
  \eqref{eq:pn:relaxed:dynamics} implies that
  $\discstep{\marking}{\stephorizon}$ is unreachable at any step under
  the original system dynamics \eqref{eq:pn:dynamics} and
  \eqref{eq:pn:boolflow}.
\end{proposition}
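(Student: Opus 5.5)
We prove the contrapositive: if $\discstep{\marking}{\stephorizon}$ is reachable at some step $\stephorizon$ under \eqref{eq:pn:dynamics} and \eqref{eq:pn:boolflow}, then the relaxed system \eqref{eq:pn:relaxed:dynamics} has a real solution. Fix a run $\discstep{\marking}{0},\discstep{\transitions}{0},\ldots,\discstep{\transitions}{\stephorizon-1},\discstep{\marking}{\stephorizon}$ satisfying the original constraints. We then build an explicit witness $(\relaxedtransitions,\slackpv,\slacknv)$. We take $\relaxedtransitions=\sum_{\stepvar=0}^{\stephorizon-1}\discstep{\transitions}{\stepvar}$, which is a nonnegative integer vector and hence feasible after relaxing integers and Booleans to reals. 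We define the slacks place by place.

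\textbf{Numeric and integer places.} Each row of \eqref{eq:pn:dynamics} holds at every step. Telescoping from $\stepvar=0$ to $\stephorizon-1$ gives exactly row $i$ of \eqref{eq:pn:relaxed:dynamics:numeric}, so the choice $\slackp_i=\slackn_i=0$ works, as required. Box constraints hold at step $\stephorizon$ because they hold for the original run.

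\textbf{Boolean places.} For a Boolean place $\varPnPlace_i$ the marking equation need not hold step by step, because a transition in $\varPnTrans_{\mathrm{in}}$ may fire while $\varPnPlace_i$ is already true. For each step we write the defect
\[
d^{\stepvar}_i=\discstep{\varPnPlace_i}{\stepvar+1}-\discstep{\varPnPlace_i}{\stepvar}-(\weighta\discstep{\transitions}{\stepvar})_i ,
\]
and then set $\slackp_i=\sum_{\stepvar}\max(d^{\stepvar}_i,0)$ and $\slackn_i=\sum_{\stepvar}\max(-d^{\stepvar}_i,0)$. Telescoping gives \eqref{eq:pn:relaxed:dynamics} row $i$ with nonnegative slacks.

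The remaining work is to check that the forced zeros are respected. By the case analysis of \eqref{eq:pn:boolflow}, consider a positive-weight place under the mutex \eqref{eq:mutex}, where at most one transition changing the place fires per step:
\begin{itemize}[label={}]
\item \emph{In-arc transitions.} An in-arc firing forces $\discstep{\varPnPlace_i}{\stepvar+1}=1$. The change is $0$ or $1$ against a contribution of $+1$, so $d^{\stepvar}_i\in\{0,-1\}$. A value of $-1$ arises only from a rebinding of a true place to true.
\item \emph{Out-arc transitions.} An out-arc firing forces a $1\to0$ flip against $-1$, so $d^{\stepvar}_i=0$.
\item \emph{Inout transitions.} These contribute net $0$ and keep the place true, so $d^{\stepvar}_i=0$.
\item \emph{No adjacent transition.} If no adjacent transition fires, the place is unchanged (frame condition), so $d^{\stepvar}_i=0$.
\end{itemize}
So nonzero defects of a given sign occur only when the corresponding kind of rebinding is possible. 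Negative weights are handled symmetrically. Hence, whenever the paper fixes $\slackp_i=0$ or $\slackn_i=0$ because no transition can perform that rebinding, our slack is indeed zero.

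\textbf{Main obstacle.} The hardest step is this bookkeeping, specifically matching the sign convention of $\slackpv,\slacknv$ to the paper's "rebind to true/false" wording and confirming the frame assumption that unaffected Boolean places persist. The latter is implicit in the encoding rather than stated in \eqref{eq:pn:boolflow}. I would state it explicitly as part of the original dynamics. If the convention turns out reversed, swapping the roles of the positive and negative parts in the slack definitions fixes it.

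Given this witness, the relaxed system is feasible, which contradicts the hypothesis. Since $\stephorizon$ was arbitrary, the marking is unreachable at every step.
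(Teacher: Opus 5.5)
Your argument is the paper's proof: sum the firings into $\tilde{\boldsymbol{\tau}}$, telescope, and choose nonnegative slacks. You make explicit two things the paper's one-line ``selecting values for $\mathbf{s}^+,\mathbf{s}^-$'' glosses over, namely the slack bookkeeping and the frame assumption on Boolean places. One small correction: \eqref{eq:mutex} does not guarantee that at most one adjacent transition fires per step, since two effect-only transitions with no precondition on the place need not conflict. Such simultaneous firings only produce defects of the sign already permitted by the existence of those rebinding transitions, so your conclusion stands.
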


\begin{proof}
  We prove by contradiction.
  Assume $\discstep{\marking}{\stephorizon}$ makes relaxation
  \eqref{eq:pn:relaxed:dynamics} infeasible but is reachable under the
  original dynamics \eqref{eq:pn:dynamics} and \eqref{eq:pn:boolflow}.
  Reachability under \eqref{eq:pn:dynamics} and \eqref{eq:pn:boolflow}
  implies that there exists a sequence of transition firings to
  progressively apply \eqref{eq:pn:dynamics} and
  \eqref{eq:pn:boolflow} to produce
  $\discstep{\marking}{\stephorizon}$.  Summing those transition
  firings and selecting values for $\slackpv, \slacknv$ satisfies
  \eqref{eq:pn:relaxed:dynamics}.  So \eqref{eq:pn:relaxed:dynamics}
  must be feasible.
  Contradiction.

\end{proof}

\begin{proposition} \label{prop:polynomial} The time complexity to
  check feasibility of 
  \eqref{eq:pn:relaxed:dynamics} is polynomial in the number of places
  and transitions in the PN.
\end{proposition}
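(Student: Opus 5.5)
The plan is to reduce the feasibility question for the relaxed dynamics \eqref{eq:pn:relaxed:dynamics} to a single linear program and then invoke polynomial-time LP algorithms (the ellipsoid method or interior-point methods).

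Let $m$ be the number of places and $n$ the number of transitions. Fix the initial marking $\discstep{\marking}{0}$ and either the target marking $\discstep{\marking}{\stephorizon}$ or the goal ranges on each place. The decision variables are then
\begin{itemize}[nosep]
\item $\relaxedtransitions \in \mathbb{R}_+^n$, the relaxed firing counts;
\item $\slackpv, \slacknv \in \mathbb{R}_+^m$, the slack vectors;
\item possibly $\discstep{\marking}{\stephorizon}$ itself, when the goal is a range.
\end{itemize}
Together these give $O(m+n)$ variables.

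The constraints are the following:
\begin{itemize}[nosep]
\item the $m$ equalities of \eqref{eq:pn:relaxed:dynamics};
\item nonnegativity of $\relaxedtransitions$, $\slackpv$, $\slacknv$;
\item the equalities $\slackp_i = 0$ or $\slackn_i = 0$ fixed by the structural rules above;
\item box constraints $\min(p) \leq p \leq \max(p)$, and for Boolean places $0 \le p \le 1$ after relaxation;
\item goal range constraints.
\end{itemize}
There are $O(m+n)$ of them. Deciding which slack variables are forced to zero requires only one scan over the arcs of each place, so this step is polynomial. The constraint matrix is built from the incidence matrix $\weighta$ together with identity blocks, so the LP instance has size polynomial in $m$, $n$, and the bit length of the arc weights and bounds.

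The main obstacle is the encoding-size caveat. Khachiyan's ellipsoid method and Karmarkar-type interior-point methods decide LP feasibility in time polynomial in the number of variables, the number of constraints, and the bit length $L$ of the rational input data. The statement says ``polynomial in the number of places and transitions,'' so the argument has to control $L$. I would handle this by assuming the PDDL constants (arc weights, inferred bounds, initial and goal values) are rationals of bounded bit length, so that $L$ is polynomial in the size of the PN. With that assumption the claimed bound follows. Alternatively, I would state the result as polynomial in the size of the PN encoding, which is the standard sense of the claim.

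Finally, I would note that integrality and Boolean constraints are deliberately dropped, so no MIP hardness arises. This relaxation is exactly what makes the check tractable, as opposed to the original constraints of \autoref{ssec:pn-constr}.
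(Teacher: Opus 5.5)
Your proposal is correct and follows the same route as the paper: write the relaxation \eqref{eq:pn:relaxed:dynamics} as a linear program and invoke a polynomial-time LP algorithm (the paper cites Karmarkar). The LP includes the nonnegativity, slack, and box/goal constraints, and its numbers of variables and constraints are polynomial in the numbers of places and transitions. Your explicit handling of the input bit length $L$ is a worthwhile sharpening that the paper leaves implicit: these LP algorithms are polynomial in $L$ rather than strongly polynomial, so the statement tacitly assumes that arc weights, bounds, and initial and goal values have bounded encoding size.
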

\begin{proof}
  Relaxation \eqref{eq:pn:relaxed:dynamics} consists of linear
  equations, non-negativity constraints on
  $\relaxedtransitions, \slackpv, \slacknv$, and bindings or box
  constraints on $\discstep{\marking}{\stephorizon}$, forming a linear
  program with decision variables
  ($\relaxedtransitions, \slackpv,
  \slacknv,\discstep{\marking}{\stephorizon}$) and constraints
  ($\weighta$) polynomial in the number of places and
  transitions. Linear programming is polynomial
  time~\cite{karmarkar1984new}.
\end{proof}

We next apply \eqref{eq:pn:relaxed:dynamics} to
invariants in \autoref{ssec:inv} and infeasibilities in
\autoref{ssec:infeasibility-expl}.

\subsection{Invariants from Linear Relaxations} \label{ssec:inv}


We apply the PN reachability relaxation \eqref{eq:pn:relaxed:dynamics}
to find invariants for planning domains.
Invariants are important for efficiency of
constraint-based planning~\cite{kautz1999unifying,rintanen2012engineering}.
We focus particularly on mutex invariants.

Two places are mutex when they cannot both be simultaneously true.
For places $\varPnPlace_u$ and $\varPnPlace_v$ and
\eqref{eq:pn:relaxed:dynamics}, mutex exists when,
\begin{equation}
  \left(
    \discstep{\varPnPlace_u}{\stephorizon}
    \land
    \discstep{\varPnPlace_v}{\stephorizon}
  \right)
  \models
  \left(
    \discstep{\marking}{h} \neq \discstep{\marking}{0} +
    \weighta\relaxedtransitions + \slackpv - \slacknv
    \right)
    \label{eq:inv:0}
    \; .
\end{equation}
Checking \eqref{eq:inv:0} simplifies to a linear program containing
only two rows of \eqref{eq:pn:relaxed:dynamics},

\begin{equation}
\begin{bmatrix}
  \discstep{\varPnPlace_u}{\stephorizon}\\
  \discstep{\varPnPlace_v}{\stephorizon}
\end{bmatrix}
=
\begin{bmatrix}
  \discstep{\varPnPlace_u}{0}\\
  \discstep{\varPnPlace_v}{0}
\end{bmatrix}
+
 \begin{bmatrix}
 \weighta_{u,:}\\
 \weighta_{v,:}
 \end{bmatrix}
 \relaxedtransitions
 +
\begin{bmatrix}
\slackp_u\\
\slackp_v
\end{bmatrix}
-
\begin{bmatrix}
\slackn_u\\
\slackn_v
\end{bmatrix}
\;.
\label{eq:inv:inv}
\end{equation}
By \autoref{prop:inf}, if \eqref{eq:inv:inv} is infeasible, there is
no reachable state where
both $\varPnPlace_u$ and $\varPnPlace_v$ are true, so
$\varPnPlace_u$ and $\varPnPlace_v$ must be mutex.

We note that \eqref{eq:inv:inv} offers two possible sources of
parallelism.  First, individual invariant checks may benefit from SIMD
parallelism on CPUs or GPUs~\cite{applegate2021practical}.  Second,
checking invariants across different pairs of places is embarrassingly
parallel, e.g. different checks may run in different threads using all
available cores.

Next, we use the collected pairwise mutexes
$(\lnot \varPnPlace_u \lor \lnot \varPnPlace_v)$ to construct more
efficiently-solvable mutex groups.  Pairwise mutexes can yield a
quadratic number of constraints that are expensive to
solve~\cite{rintanen2012planning}.  Instead, we reduce the number of
constraints by constructing mutex groups of the form,
\begin{equation}
  \sum_{i\in \mathrm{group}} \varPnPlace_i \leq 1
  \label{eq:pn:placemutex}
  \; ,
\end{equation}
which are directly handled by SMT solvers with pseudo-Boolean
support~\cite{de2008z3} and MIP solvers.
Constructing maximal mutex groups is
PSPACE-Complete~\cite{fisler2018fact}, so we take a greedy approach
that progressively grows a mutex group when we can add another
variable that is mutually exclusive with all current group members.

From a mutex group, we can in some cases identify a more precise and
efficiently solvable one-hot invariant
when some place in the group is initially true and every transition
that disables a place in the group enables another place in the
group. The resulting one-hot invariant is,
\begin{equation}
  \sum_{i\in \mathrm{group}} \varPnPlace_i = 1
  \; .
  \label{eq:pn:placemutex}
\end{equation}

Experiments in \autoref{expe:inv} show that relaxation
\eqref{eq:pn:relaxed:dynamics} generates similar invariant counts as
baselines for tested domains while further supporting numeric domains.

\subsection{Infeasibility Explanations from Linear Relaxations} \label{ssec:infeasibility-expl}
We apply the PN reachability relaxation \eqref{eq:pn:relaxed:dynamics}
to identify and \emph{explain} planning infeasibilities.
Planning is infeasible when there is no plan from the start to the
goal.
We consider a planning infeasibility explanation to be a minimal set
of clauses causing the infeasibility, i.e., leading to unsatisfiable
constraints.

We check planning feasibility using relaxation
\eqref{eq:pn:relaxed:dynamics} under the goal $\plangoal$,
\begin{equation} \label{eq:goalinf}
  \plangoal \models
  \left(
    \discstep{\marking}{\stephorizon} \neq \discstep{\marking}{0} + \weighta\relaxedtransitions + \slackpv - \slacknv
  \right)
\; .
\end{equation}
By \autoref{prop:inf}, infeasibility in \eqref{eq:goalinf} means
the planning problem is infeasible.

Infeasibility explanations relate to unsatisfiable
cores~\cite{liffiton2008algorithms} and Irreducible Inconsistent
Subsystems (IISs)~\cite{chinneck2008feasibility}, which both identify
minimal, conflicting sets of constraints. 
Identifying all such inconsistencies is computationally hard
\cite{amaldi2003maximum}.

We focus on finding inconsistent sets of goal conditions. Goal
conditions offer potentially useful feedback since they directly
relate to an originally specified planning problem, while other
constraints arise from an involved sequence of transformations.
Specifically, an explanation $E \subseteq \plangoal$ is a subset of
goal conditions causing \eqref{eq:pn:relaxed:dynamics} to be
infeasible, but where \eqref{eq:pn:relaxed:dynamics} is feasible for
any strict subset of $E$.  We want to find all such subsets $E$.

The difficulty of finding all inconsistent goal sets depends on the
number of goal conditions.  For few goal conditions, we may enumerate
combinations in increasing cardinality, terminating a branch on
infeasibility, and collecting all infeasible combinations.  For many
goal conditions, equation \eqref{eq:pn:relaxed:dynamics} extends to an
integer program~\cite{amaldi2003maximum} that minimizes the number of
goal conditions to disable,
\begin{equation}
  \begin{split}
  \min \quad & \sum_{i=1}^{\ell} y_i \\
  \textrm{s.t.}
  \quad &
    \discstep{\marking}{\stephorizon} = \discstep{\marking}{0} +
          \weighta\relaxedtransitions + \slackpv - \slacknv \\
  & \lnot y_i \implies \plangoal_i,\quad \forall i \in 1,\ldots,\ell
    \; ,
  \end{split}
  \label{eq:inf:mip}
\end{equation}
\noindent where Boolean decision variables $y_i$ disable corresponding
goal conditions $G_i$.  Finding alternative solutions to
\eqref{eq:inf:mip} by asserting previously disabled conditions are not
jointly disabled yields sets of inconsistent goal conditions.

Experiments in \autoref{expe:inf} show that relaxation
\eqref{eq:pn:relaxed:dynamics} outperforms all baselines at
infeasibility detection for tested domains and further explains
infeasibilities.

\subsection{Forward and Backward Reachable Sets} \label{ssec:reachability-sets}

Next, we compute per-step reachable sets forward from the start and
backward from the goal.
These sets are optimistic approximations: states outside the set are
unreachable, while states in the set are possibly (but not
necessarily) reachable.
We represent a reachable set using bindings for a subset of state
variables, so the set contains states not conflicting with the
bindings.
Reachable sets help identify constants, eliminate decision variables,
and bound the minimum step horizon $\stephorizon$.

We describe reachable set propagation for forward reachability.
Backward reachability is similar but swaps use of preconditions with
effects and start with goal states.  We propagate known bindings from
one step to the next according to which transitions can or cannot
fire.
Conceptually, this approach is similar to planning graphs and other
unfolding techniques\cite{blum1997fast,hickmott2007planning}.
However, we propagate sets via \emph{partial evaluation} support
complex expressions and numeric state.

Set propagation uses functions \pval{} and \psat{}.
Function \pval{} takes an SMT-LIB~\cite{BarFT-RR-25} expression and
variable bindings to produce an equivalent (partially evaluated)
expression by substituting bindings and algebraically simplifying.
Function \psat{} is an incomplete procedure for Boolean and arithmetic
satisfiability that uses unit propagation and pure literal
elimination~\cite{marques1999grasp,moskewicz2001chaff} for
satisfiability tests that are
polynomial-time~\cite{karp1972reducibility} but may return
\DataSty{UNKNOWN}.

The first step of reachable set propagation identifies which
transitions cannot fire.  For transition $\varPnTrans_j$, if its
precondition $e$ under currently known bindings $\pbindingsstep{k}$ is
\DataSty{UNSAT}, the transition cannot fire, and we add
$\lnot\discstep{\varPnTrans_j}{k}$ to $\pbindingsstep{k}$,
\begin{equation}
  \left(
    \DataSty{UNSAT} = \psat\left(\pval
      \lmtuple{e}{\pbindingsstep{k}}
    \right)
  \right)
  \implies \lnot\discstep{\varPnTrans_j}{k}
  \; .
\end{equation}
The second step of propagation tests whether bound state variables can
change at the next step.
For bound place $\varPnPlace_i$, if its update $e$ from
\eqref{eq:pn:numflow} or \eqref{eq:pn:boolflow} and distinct next
value under bindings $\pbindingsstep{k}$ are \DataSty{UNSAT}, we
propagate the binding,
\begin{equation}
  \begin{split}
  & \left(
    \DataSty{UNSAT} =
    \psat\left(
      \pval
        \left(
          e \land
          \left( \discstep{\varPnPlace_i}{k} \neq \discstep{\varPnPlace_i}{k+1}\right)
          ,\:
          \pbindingsstep{k}
      \right)
    \right)
  \right) \\
  & \qquad \qquad \qquad \qquad \qquad \implies
  \left( \discstep{\varPnPlace_i}{k}=
          \discstep{\varPnPlace_i}{k+1}\right) \; .
\end{split}
\end{equation}

The result of propagating reachable sets to a fixpoint is a mapping of
per-step variable bindings forward from the start and backward from
the goal.
Any bindings present in the forward reachability fixpoint represent
constants.
We use these bindings to eliminate the decision variable for any known
binding in the constraint formulation.
A lower bound $\stephorizon$ on steps in feasible plans is the maximum
of the forward reachable step consistent with the start and the
backward reachable step consistent with the goal.

\subsection{Solutions via SMT and MILP Solvers} \label{sec:solve}

We solve constraints to find plans using SMT~\cite{de2008z3} or
MILP~\cite{gurobi} solvers.
Different solver algorithms or implementations perform better for
different problems, suggesting advantages for a portfolio of
solvers~\cite{rungta2022billion,combrink2025comparative}.
SMT solvers directly support the high-level constraint expressions
from the previous subsections~\cite{BarFT-RR-25}, and we leverage
incremental solving to progressively extend the step horizon.
MILP solvers operate on separately-specified linear constraints, so we
translate the SMT-LIB expressions to conjunctions of linear
constraints.
We first describe extending the step horizon and then translating
SMT-LIB to MILP constraints.

\subsubsection{Incrementally Solving Horizon Extension}
Generally, constraint-based planners check satisfiability for a
bounded number of steps $\stephorizon$ and increase $\stephorizon$
until finding a valid plan.
Early formulations considered each bound $\stephorizon$
separately~\cite{kautz1992planning}.
Madagascar identified commonalities between different
bounds $\stephorizon$~\cite{rintanen2014madagascar}.
Prior use of SMT solvers for planning used the assertion
stack~\cite{dantam2018tmp,dantam2018tmkit}.
However, popping an assertion scope may cause solvers to discard prior
work~\cite{bjorner2018programming}.
We propose instead to use per-check assumptions for more effective
incremental solving.

Horizon extension involves (1) declaring new decision variables and
constants\footnote{The SMT-LIB standard uses the term
  ``constant''~\cite{BarFT-RR-25} for what optimization literature
  calls a ``decision variable.''}, (2) asserting the transition
function at that step, and (3) checking satisfiability of the goal at
the next step.
Extending the horizon to $k$, we declare decisions variables for each
place $\discstep{\varPnPlace_i}{k}$ and transition
$\discstep{\varPnPlace_i}{k}$ at that step.  If forward reachability
(see \autoref{ssec:reachability-sets}) indicated a known binding, then
we declare instead constants (i.e., \DataSty{true}, \DataSty{false},
or a number).
Next, we assert that the transition function (see
\autoref{ssec:pn-constr}) holds from steps $k-1$ to $k$ and that
invariants (see \autoref{ssec:inv}) hold at step $k$.
Finally, we check satisfiability with assumptions
(\KwSty{check-sat-assuming}) that the goals holds at step $k$ and
known bindings from backwards reachability (see
\autoref{ssec:reachability-sets}) hold at steps prior to $k$.  We
repeatedly increase the horizon until goal is satisfied.

\subsubsection{Transforming SMT-LIB to MILP Constraints}

We translate SMT-LIB to MILP constraints using a variation of the
Tseitin~\cite{tseitin1983complexity} or Plaisted-Greenbaum
transformation~\cite{plaisted1986structure} transformation to
construct conjunctive normal form (CNF) expressions, and we further
eliminates some decision variables and apply indicator constraints for
disjunctions of linear relations~\cite{belotti2016handling}.

The Tseitin and Plaisted-Greenbaum transformations construct CNF by
recursively introducing auxiliary variables for subexpressions.
Tseitin introduces auxiliary variables $\varTsei$ that equal
subexpressions
$\operatorname{op}{\lmtuple{\varLiteral_1}{\ldots}{\varLiteral_n}}$
over literals $\varLiteral_1, \ldots,\varLiteral_n$.
Plaisted and Greenbaum observed that satisfiability requires only one
direction of the equality biconditional, which from negation normal
form, becomes,
\begin{equation}
  \operatorname{op}{\lmtuple{\varLiteral_1}{\ldots}{\varLiteral_n}}
  \quad \leadsto \quad
  \varTsei \implies
  \operatorname{op}{\lmtuple{\varLiteral_1}{\ldots}{\varLiteral_n}}
  \; .
  \label{eq:pg}
\end{equation}
Equation \eqref{eq:pg} reduces directly to conjunctions of
disjunctions.

We further eliminate auxiliary variables in some cases.
Conjunctions involving auxiliary variables $\varTsei_i$ require all to
hold, so we eliminate the $\varTsei_i$'s by rewriting their
implications using $\varTsei'$,
\begin{equation}
  \left(\bigwedge \varTsei_i\right)
  \land \left(\bigwedge \varLiteral_j\right)
  \quad \leadsto \quad
  \left(\varTsei' \implies
    \bigwedge \varLiteral_j
  \right),
  \quad
  \left(
    \varTsei_i \implies \varExp
    \ \leadsto \
    \varTsei' \implies \varExp
  \right)
  \; .
\end{equation}
Disjunctions involving a single auxiliary variable $\varTsei$ require
either $\varTsei$ or some literal $\varLiteral_j$ to hold, so we
eliminate $\varTsei$ by replacing implications of $\varTsei$
with $\varTsei'$,
\begin{equation}
  \varTsei \lor \bigvee \varLiteral_j
  \quad \leadsto \quad
  \left(
    \varTsei \implies \varExp
    \ \leadsto
    \varTsei' \implies
    \varExp  \lor \bigvee \varLiteral_j
  \right)
  \; .
\end{equation}
Reducing the resulting CNF to linear constraints is
well-established~\cite{karp1972reducibility}.

Support for logical expressions over numeric relations
($\VEC{a}^T\VEC{x} \leq b$) requires additional consideration.
Generally, such numeric relations pose disjunctive
programs~\cite{balas1979disjuntive}.  We incorporate numeric relations
into the CNF transformation by creating auxiliary variables that imply
the relation,
\begin{equation}
\varTsei \implies \VEC{a}^T\VEC{x} \leq b
\; .
\label{eq:cnf:relation}
\end{equation}
Some solvers support implication \eqref{eq:cnf:relation} directly as
an indicator constraint~\cite{gurobi}.
For solvers lacking indicator
constraints~\cite{huangfu2018parallelizing}, we apply the Big M
method~\cite{bazaraa2009linear} coupled with interval
arithmetic~\cite{jaulin2001applied} to find tight $M$ values,
\begin{equation}
  \VEC{a}^T\VEC{x} + M \varTsei  \leq b + M\; ,
  \quad\quad
  M = s(\overline{\VEC{a}^T\VEC{x}} - b)
  \; ,
  \label{eq:cnf:bigm}
\end{equation}
\noindent where $\overline{\VEC{a}^T\VEC{x}}$ is the upper bound of
$\VEC{a}^T\VEC{x}$ computed via interval arithmetic based on constant
coefficients $\VEC{a}$ and box constraints for decision variables
$\VEC{x}$, and $s > 1$ is a scaling factor to tolerate floating point
error (our implementation uses $s=2$).

\subsection{Updates and Incremental Sequential Planning}
\label{sec:incremental}

We consider two forms of updates for sequential planning problems:
changing goals and adding constraints.
Changing goals may be useful when an initial problem is infeasible,
yet a subset of conflicting goals (see
\autoref{ssec:infeasibility-expl}) is still desirable.
Adding constraints may be useful when a candidate plan is undesirable
due to an initially unspecified requirement.
While more general updates are possible---e.g., changes to state
variables, preconditions, or effects---these restricted update forms
are efficient  to implement via incremental constraint solving.

\paragraph{Changing Goals}
A change of goals operates on update $\updates$ containing a list of added
clauses $\plangoalclause_i^+$ and deleted clauses
$\plangoalclause_j^-$,
\begin{equation}
  \updates =
  \lmtuple{\plangoalclause_1^+}{\ldots}{\plangoalclause_m^+},\quad
  \lmtuple{\plangoalclause_1^-}{\ldots}{\plangoalclause_n^-}
  \; .
\end{equation}
Goal changes modify the PN's goal marking, but the PN is otherwise
unchanged.
The relaxed marking equation (\autoref{ssec:relaxed}) and
invariants (\autoref{ssec:inv}) are unmodified.
We recheck feasibility (\autoref{ssec:infeasibility-expl}).
Forward reachability is unmodified, and we recompute backward
reachability (\autoref{ssec:reachability-sets}).
Finally, the updated goal and backward reachability bindings become the
per-check assumptions (\autoref{sec:solve}).

\paragraph{Adding Constraints}
An addition of constraints operates on update $\updates{}$ containing
a list of logical expressions or linear relations $e_i$ that must hold
across all steps,
\begin{equation}
  \updates = \lmtuple{e_1}{\ldots}{e_n}
  \; .
\end{equation}
This update creates additional constraints for the marking equation
(\autoref{ssec:pn-constr}),
  $\discstep{e_1}{k+1} \land \ldots \land \discstep{e_n}{k+1}$,
and correspondingly for the linear relaxation (\autoref{ssec:relaxed}),
  $\discstep{e_1}{h} \land \ldots \land \discstep{e_n}{h}$.
We must recompute the invariants (\autoref{ssec:inv}), goal
reachability (\autoref{ssec:infeasibility-expl}), and reachable sets
(\autoref{ssec:reachability-sets}) under these new constraints, noting
that these preprocessing steps are all polynomial time.
Each additional constraint $e_i$ along with new invariants and known
bindings from forward reachability become new assertions for the
constraint solver, noting that these new assertions are typically
efficiently
handled~\cite{bjorner2018programming,de2008z3,een2003extensible}.
Finally, any new known bindings from backward reachability became new
assertions for satisfiability checks.

Experiments in \autoref{expe:int} show that the incremental constraint
approach outperforms all baselines for sequential planning in the
tested domains.

\section{Experiments and Discussion} \label{sec:result}

We evaluate our approach for (1) invariant generation, (2)
infeasibility detection, and (3) sequential planning, and we compare
to the following efficient and state-of-the-art baselines:
constraint-based Madagascar (Mp)~\cite{rintanen2014madagascar},
heuristic Fast Downward (FD) \cite{helmert2006fast}, numeric Metric-FF
(FF) \cite{hoffmann2003metric} and ENHSP \cite{scala2016interval}.
Our results show that our approach found similar numbers of invariants
as Mp and FD in classical domains and outperformed baselines on
infeasibility detection and sequential planning for the tested
domains.

Our implementation extends TMKit~\cite{dantam2018tmkit}. We solve
constraints for the relaxation \eqref{eq:pn:relaxed:dynamics} and
planning (\autoref{sec:solve}) using Z3 \cite{de2008z3} and Gurobi
\cite{gurobi}.
The tested domains come from past International Planning Competitions
(IPCs)~\cite{ipc2016,ipc2023}, are auto-generated by
\cite{seipp-et-al-zenodo2022}, or randomly generated by ourselves.
We ran our benchmarks using one core of an AMD Ryzen 9 7950X3D CPU
under Debian 12.

\subsection{Invariant Generation} \label{expe:inv}

\begin{wrapfigure}[17]{r}{.4\textwidth}
  \vspace{-28pt}
  \includegraphics[width=\linewidth]{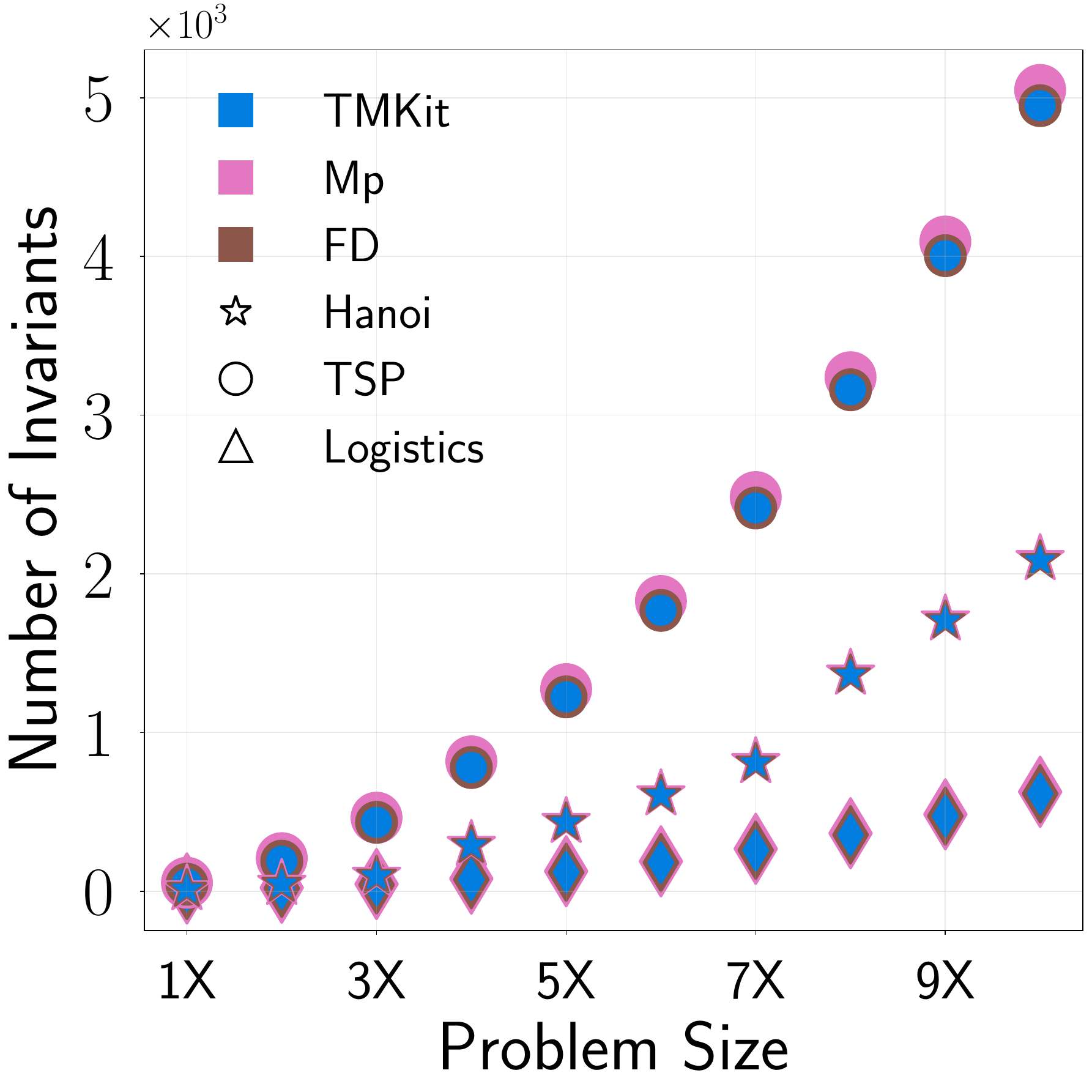}
  \caption{Two-literal invariants from our PN relaxation and
    baselines.  All methods produce similar numbers of
    invariants.
  }
  \label{img:invariants}
\end{wrapfigure}

We evaluate invariant generation from \autoref{ssec:inv} for
increasing problem size in three combinatorial problems.
We compare against Mp and FD because the other baselines do not
produce invariants. We only evaluate classical (not numeric) domains,
because Mp and FD only support classical domains.

\autoref{img:invariants} plots invariant counts.
Our PN relaxation produces similar numbers of invariants as Mp and FD
in the tested domains, and \autoref{prop:inf} guarantees invariant
validity.
Mp yields slightly more invariants than ours and FD, while
FD produces the same number of invariants as ours.

Each evaluated planner generates invariants differently, and the
additional invariants from Mp can be explained by its unique invariant
synthesis.
Our method checks inconsistency of relaxed PN flow to identify
pairwise invariants.
FD checks the monotonicity of each candidate across actions and
iteratively strengthens a non-monotonic invariant by adding axioms to
maintain its monotonicity~\cite{helmert2009concise}.
Mp checks a candidate invariant through operator regressions while
maintaining the consistency of the regressions with other candidate
invariants~\cite{rintanen2008regression}, and a similar iterative
approach~\cite{rintanen2008regression} generates more classes of
invariants than FD~\cite{helmert2009concise}.
Our method is similar to FD in that we check candidate invariants
independently.

\subsection{Infeasibility Detection} \label{expe:inf}

\begin{figure} [t]
    \centering
    \includegraphics[width=1\linewidth]{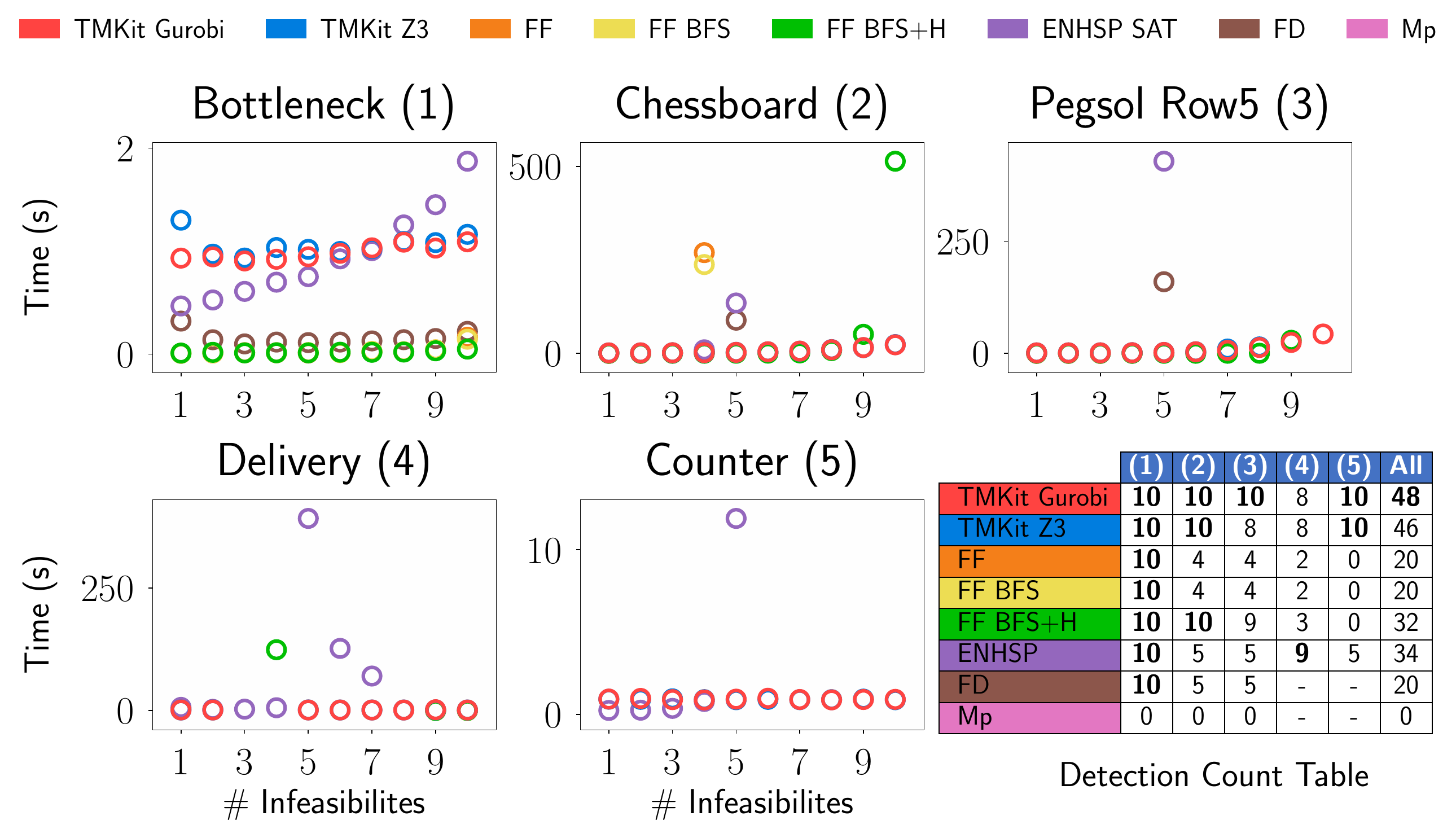}
    \vspace{-15pt}
    \caption{Average infeasibility detection times over $10$ iterations and   counts. Absent data points exceeded a $10$-minute timeout or
      \qty{8}{\gibi\byte} memory limit. Dashes in the table indicates
      unsupported numeric domains for Mp and FD. }
    \label{img:inf-times}
    \label{table:infeasibility-detection}
    \vspace{-9pt}
\end{figure}

We evaluate infeasibility detection from
\autoref{ssec:infeasibility-expl} on classical problems from
Unsolvability IPC 2016~\cite{ipc2016}, and
we auto-generate additional unsolvable numeric problems (Delivery and
Counter). Each tested domain has $10$ problems of increasing size.
Planners have a 10 minute timeout \qty{8}{\gibi\byte} memory limit.
Our reachability relaxation identifies up to twice as many
infeasibilities as the baselines in the tested domains, finding $48$
infeasibilities out of $50$ cases (see
\autoref{table:infeasibility-detection}).

Our relaxation is particularly robust at infeasibility detection in
numeric domains. FF's performance is comparable to ours in
combinatorial problems but worse on numeric problems. ENHSP exhibits
an average performance on both tracks, although it finds one more
infeasibility than ours in the Delivery domain.

Different procedures contribute to the varying detection capabilities.
Heuristic planners use a heuristic function (relaxation) to detect
dead-ends that cannot reach the goal, though missed dead-ends may
result in an expensive, exhaustive search.
On the other hand, our LP relaxation enables polynomial time
feasibility tests, and \autoref{img:inf-times} indicates that LP
feasibility closely approximates planning feasibility. From
\autoref{prop:inf}, feasible domains never produce infeasible LPs.

Beyond infeasibility detection, our approach also explains
infeasibility in direct connection to original planning domains.
Our explanations are minimal sets of mutually unreachable goals (see
\autoref{ssec:infeasibility-expl}), which convey helpful information
to resolve conflicts in subsequent plan updates. Heuristic approaches
report different infeasibility information in the form of
unsolvability certificates~\cite{eriksson2017unsolvability},
i.e. dead-ends, which may offer some details on how to resolve
infeasibilities.

\subsection{Sequential Planning} \label{expe:int}

\begin{figure} [t]
    \centering
    \includegraphics[width=1\linewidth]{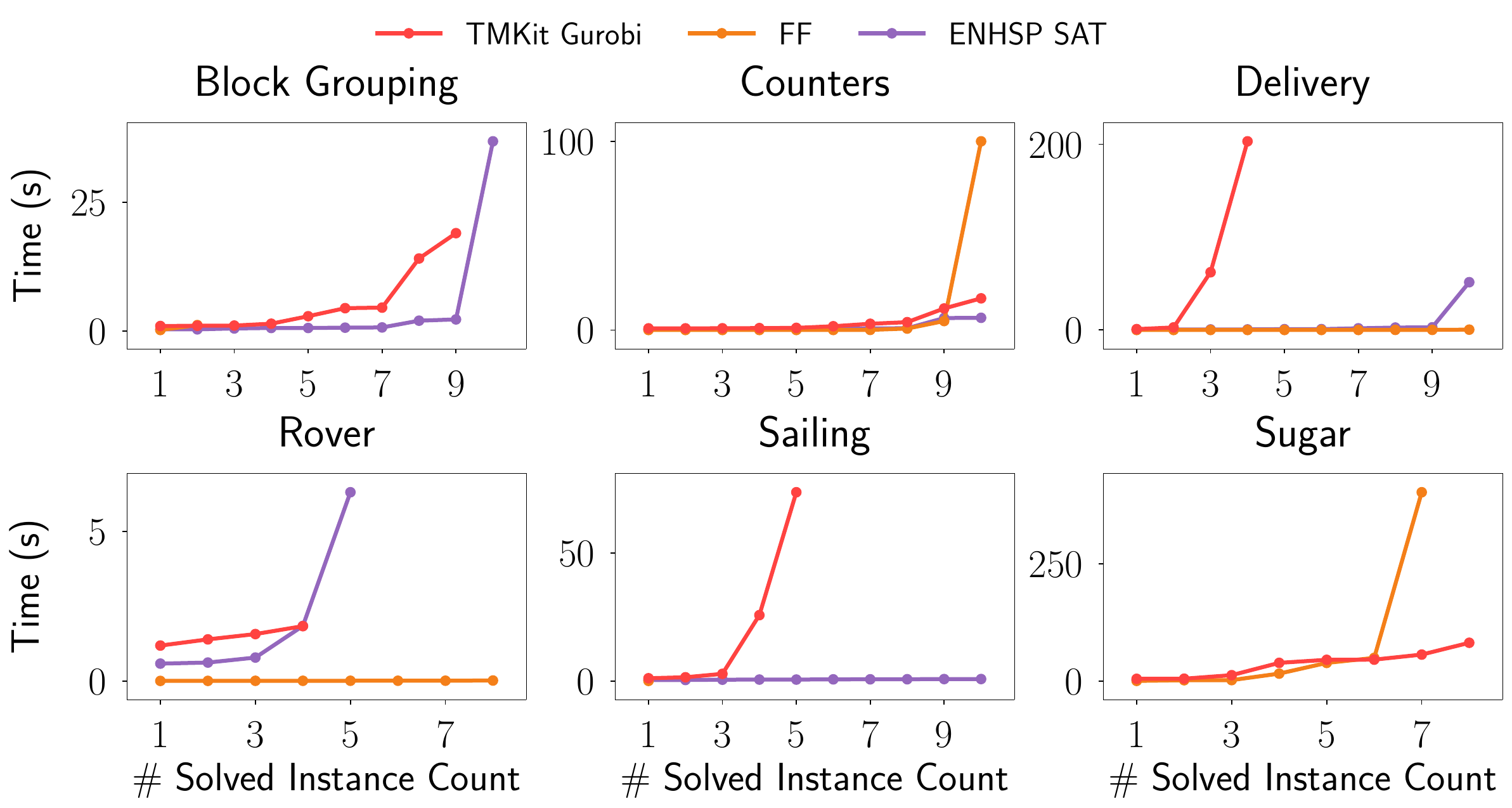}
    \vspace{-15pt}
    \caption{Average one-shot planning times over $10$ iterations. The figures show the numbers of
      instances solved over time in numeric domains. Excluded data
      points exceeded the $10$-minute timeout or \qty{8}{\gibi\byte}
      memory limit.}
    \label{img:runtimes}
    \vspace{-12pt}
\end{figure}

\begin{figure} [t]
    \centering
    \includegraphics[width=1\linewidth]{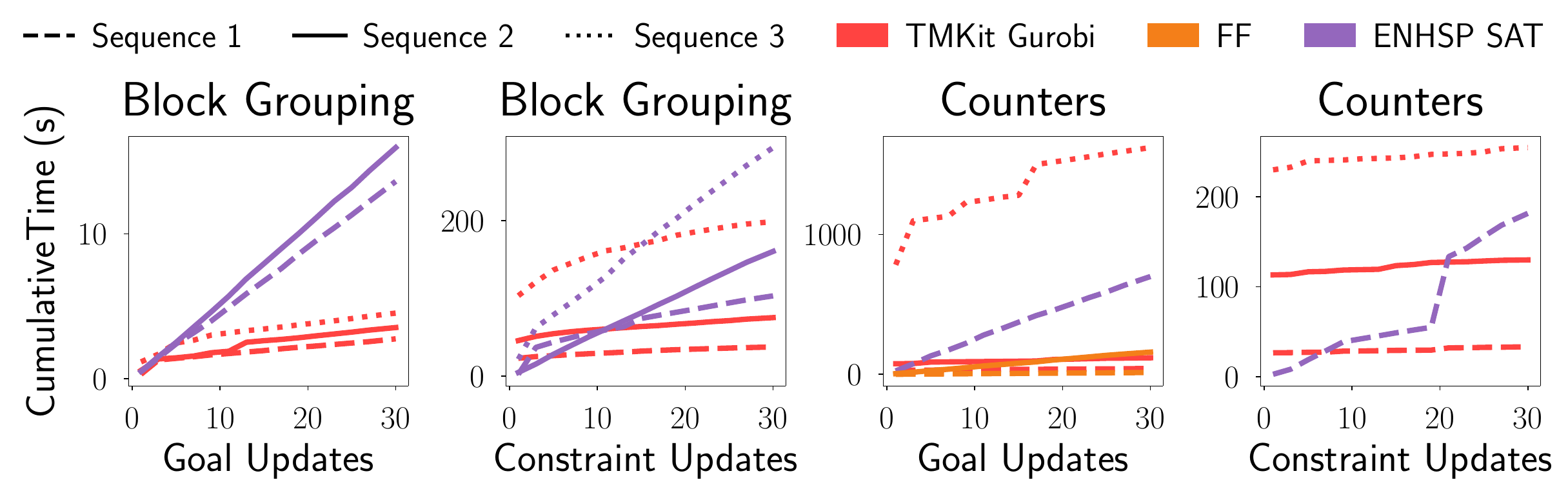}
    \vspace{-15pt}
    \caption{Average cumulative planning times over $10$ iterations 
      for sequential planning. TMKit
      Gurobi outperforms the baselines in most sequences.  Excluded
      data points exceeded the $30$-minute timeout or
      \qty{8}{\gibi\byte} memory limit.  }
    \label{img:updates}
    \vspace{-12pt}
\end{figure}

We evaluate sequential planning from \autoref{sec:incremental} by
first considering time for traditional one-shot planning followed by
cumulative time for sequential updates and planning.
For one-shot planning, we evaluate six Simple Numeric
Planning (SNP) domains from IPC 2023~\cite{ipc2023}. Each domain
consists of $10$ difficulty-varying problems.
For sequential planning, we benchmark the planners on randomly
generated update sequences in the Block Grouping and Counters
domains. The sequences increase in the problem size. We consider two
forms of plan updates: goal changes and constraint additions. Each
sequence consists of $30$ updates.
We show results for TMKit Gurobi, FF, and ENHSP SAT, which were most
efficient variations of their respective planners on these tests.

Relative one-shot planner performance is domain dependent (see
\autoref{img:runtimes}).
Our approach outperforms
FF in the Block Grouping, Counters, Sailing, and Sugar domains and
outperforms ENHSP SAT in the Sugar domain.
Invariant information and efficient numeric constraint solving lead to
our system's one-shot performance.
Though one-shot performance was not the main focus of our work, we do
see efficient solutions for some tested problems.

We outperforms the baselines in sequential planning in the tested
domains (see \autoref{img:updates}).
TMKit Gurobi takes more time for some initial problems, but cumulative
times over sequential updates are less than the baselines.
In addition, while our constraint-based framework supports additional
constraints, some baselines including Mp, FD, and FF could not handle
these updates due to lack of support for PDDL global constraints.

Our outperformance for sequential planning comes from incremental
solving and warm starts. The baselines only support one-shot planning
and thus replan per update.
In contrast, we retain the constraints and reuse the previous search
information to guide the next search.  Gurobi as the underlying
constraint solver applies the previous solution as a warm start for
the next search, resulting in efficient updates when changing goals or
adding constraints~\cite{gurobi}.

\subsection{Limitations}

Our PN relaxation depends on checking LP infeasibility, and floating
point error can lead to spurious
infeasibilities~\cite{gurobi,FicoXpressOptimizerManual}.
Rational arithmetic (typical of SMT solvers~\cite{de2008z3})
eliminates floating point error at additional computational cost,
though does not address domains requiring irrational
coefficients---e.g., $\pi$, trigonometric functions.

The incremental constraint-based approach may present limitations in
certain cases.
Constraint-based planning faces performance challenges under large
numbers of grounded state variables or long
horizons~\cite{elahi2024optimizing}, and which our current work does
not address.
Conversely, on easy problems, constraint solving may impose
uncompensated overhead; \autoref{img:updates} shows FF outperforms our
system on the smallest Counters problem.
Incremental solving and warm starts help only after solving the first
problem in a sequence, and heuristic approaches are often
highly-effective at one-shot planning.
While our approach performed effectively in the experimental tests,
there are cases it does not address.

\section{Conclusion}

We presented an approach for infeasibility explanation and sequential
planning that combines a relaxation of Petri Net (PN) reachability and
incremental constraint solving.
The PN reachability relaxation is a linear program (LP); feasibility
of this LP is necessary for planning feasibility, offering polynomial
time invariant and plan infeasibility tests.
Further, the LP offers infeasibility explanations in terms of
conflicting constraints.
The incremental constraint approach reuses prior effort when solving
updated problems for sequential planning.
Empirically, the PN relaxation produced similar numbers of invariants
and identified up to $2\times$ more infeasible cases with lower
detection times and better scaling compared to the baseline approaches
on the tested problems.  Finally, our incremental approach for plan
updates showed improved cumulative running times compared to the
baselines in sequential planning on the tested problems.

\clearpage

%
\bibliographystyle{splncs04}
\bibliography{wafr,dyalab-pubs, dyalab-refs}

\end{document}